\theoremstyle{plain}
\newtheorem{theorem}{{Theorem}}
\newtheorem{lemma}{{Lemma}}
\newtheorem{assumption}{{Assumption}}
\newcommand{\norm}[1]{\left\lVert#1\right\rVert}
\newcommand{\bbm}{\begin{bmatrix}}
\newcommand{\ebm}{\end{bmatrix}}
\newcommand{\Real}{\mathbb{R}}
\newcommand{\lone}{{\mathcal{L}_1}}
\newcommand{\linf}{{\mathcal{L}_\infty}}
\title{\LARGE \bf 
High-Precision Trajectory Tracking in Changing Environments Through $\lone$ Adaptive Feedback and Iterative Learning }
\author{Karime Pereida, Rikky R. P. R. Duivenvoorden, and Angela P. Schoellig
\thanks{The authors are with the Dynamic Systems Lab (www.dynsyslab.org) at the 
University of Toronto Institute for Aerospace Studies (UTIAS), Canada. Email:
        { \{karime.pereida, rikky.duivenvoorden\}@robotics.utias.utoronto.ca,} 
        { schoellig@utias.uto\-ronto.ca}. }%
\thanks{This research was supported in part by NSERC grant RGPIN-2014-04634, the Connaught New Researcher
Award, and the Mexican  National Council  of  Science  and  Technology  (abbreviated  CONACYT).}%
}
\newcommand\copyrighttext{\footnotesize \textbf{Accepted version.} Accepted at \textit{2017 IEEE International Conference on Robotics and Automation.}

\textcopyright 2017 IEEE. Personal use of this material is permitted. Permission from IEEE must be obtained for all other uses, in any current or future media, including reprinting/republishing this material for advertising or promotional purposes, creating new collective works, for resale or redistribution to servers or lists, or reuse of any copyrighted component of this work in other works.}
\newcommand\copyrightnotice{\begin{tikzpicture}[remember picture,overlay]
\node[anchor=south,yshift=10pt] at (current page.south) {\fbox{\parbox{\dimexpr\textwidth-\fboxsep-\fboxrule\relax}{\copyrighttext}}};
\end{tikzpicture}}
\begin{document}

\maketitle
\thispagestyle{empty}
\pagestyle{empty}

\copyrightnotice{} 
\begin{abstract}
As robots and other automated systems are introduced to unknown and dynamic environments, robust and adaptive control strategies are required to cope with disturbances, unmodeled dynamics and parametric uncertainties. In this paper, we propose and provide theoretical proofs of a combined $\lone$ adaptive feedback and iterative learning control (ILC) framework to improve trajectory tracking of a system subject to unknown and changing disturbances. The $\lone$ adaptive controller forces the system to behave in a repeatable, predefined way, even in the presence of unknown and changing disturbances; however, this does not imply that perfect trajectory tracking is achieved. ILC improves the tracking performance based on experience from previous executions. The performance of ILC is limited by the robustness and repeatability of the underlying system, which, in this approach, is handled by the $\lone$ adaptive controller. In particular, we are able to generalize learned trajectories across different system configurations because the $\lone$ adaptive controller handles the underlying changes in the system. We demonstrate the improved trajectory tracking performance and generalization capabilities of the combined method compared to pure ILC in experiments with a quadrotor subject to unknown, dynamic disturbances. This is the first work to show $\lone$ adaptive control combined with ILC in experiment. 

\end{abstract}

\section{INTRODUCTION}
Robots and automated systems are being increasingly deployed in unknown and dynamic environments. Operating in these environments requires sophisticated control methods that can guarantee high overall performance even in the presence of model uncertainties, unknown disturbances and changing dynamics. 
Examples of robotic applications  in these increasingly challenging environments include autonomous driving, assistive robotics and unmanned aerial vehicle (UAV) applications such as airborne package delivery. In the latter example, UAVs are required to deliver packages with different mass properties (mass, center of gravity and inertia), which influence the dynamic behavior of the UAV. Designing a controller to achieve high performance for each package is not feasible and small changes in the conditions may result in a dramatic decrease in controller performance and potential instability (see \cite{Skelton1989}, \cite{Morari1999} and \cite{Skogestad2007}). 

\begin{figure}[htb]
\centering{
\includegraphics[width=0.5\textwidth]{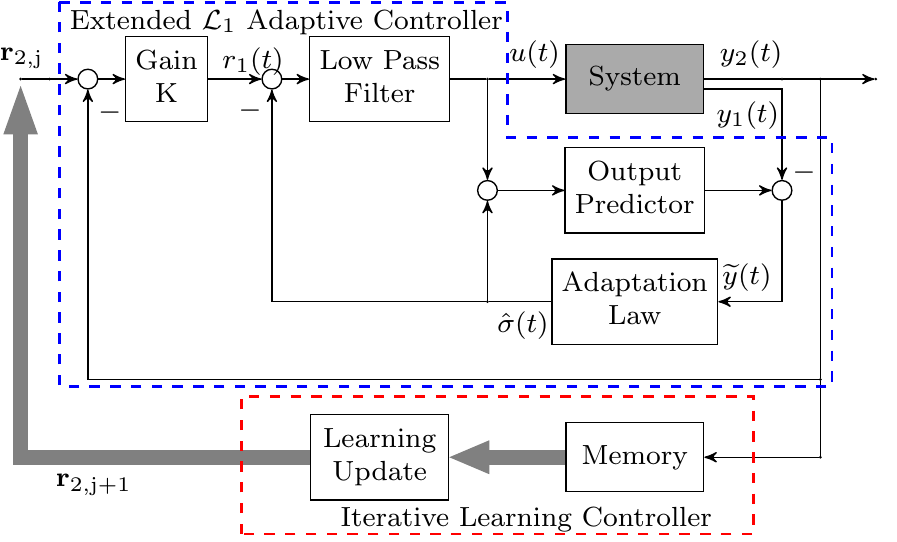}
}
\vspace{-0.5cm}
\caption{Proposed framework to achieve high performance control in changing environments. The extended $\lone$ adaptive controller forces the system to behave in a predefined, repeatable way. The iterative learning controller improves the tracking performance in each iteration $j$ based on experience from previous executions.} 
\label{fig:cascadedL1architecture}
\end{figure}

The goal of this work is to design a controller such that the system shows a repeatable and reliable behavior (that is, achieves, for the same reference input, the same output) even in the presence of unknown disturbances and changing dynamics, and improves its performance over time. In this paper, we focus on improving the trajectory tracking performance over task iterations, and propose and provide theoretical proofs of a combined $\lone$ adaptive feedback and iterative learning control (ILC) framework (see  Fig.~\ref{fig:cascadedL1architecture}). The $\lone$ adaptive controller forces the system to behave in a repeatable, predefined way, even if it is subject to model uncertainties and unknown disturbances. As a result, we obtain a repeatable system; however, perfect trajectory tracking is not achieved. To learn from previous iterations and gradually improve the trajectory tracking performance of the overall system, we implement ILC. Experimental results on a quadrotor show that the proposed approach achieves high tracking performance despite dynamic disturbances. Moreover, we show that learned trajectories can be generalized across different system configurations because the $\lone$ controller handles any (dynamic) disturbances that affect the system. 

$\lone$ adaptive control and ILC have previously been combined to improve trajectory tracking performance (see \cite{Barton2011}, \cite{Altin2013}, and \cite{Altin2014}). 
In previous work, the control input to the system ($u(t)$ in Fig.~\ref{fig:cascadedL1architecture}) was constructed by combining both $\lone$ and ILC inputs in a parallel architecture. In contrast, the serial architecture proposed in this paper places the $\lone$ adaptive control as an underlying controller, while the ILC acts as a high-level adaptation scheme that mainly compensates for systematic tracking errors. This serial architecture allows us to decouple the task of making the system behave in a predefined way even in the presence of disturbances, from the task of improving the tracking performance. Furthermore, the results presented in \cite{Barton2011}, \cite{Altin2013}, and \cite{Altin2014} are restricted to simulations while the proposed approach is the first work to show the $\lone$-ILC architecture in experiment. 


$\lone$ adaptive control is based on the model reference adaptive control (MRAC) architecture with the addition of a low-pass filter that decouples robustness from adaptation~\cite{Hovakimyan2010}. This allows arbitrarily high adaptation gains to be chosen for fast adaptation. This algorithm has been successfully implemented on UAVs to augment a baseline controller for improved disturbance rejection. Attitude control based on $\lone$ adaptive control was shown in~\cite{paper:Mallikarjunan2012}, where three algorithms were successfully implemented and tested on a quadrotor, hexacopter and octocopter, respectively. In~\cite{How2009}, $\lone$ adaptive control is implemented for a quadrotor in translational velocity output feedback control, and shows the ability of the controller to compensate for artificial reduction in the speed of a single motor. In this work, we also use $\lone$ adaptive output feedback on translational velocity, as it guarantees robustness bounds, and has a-priori known steady-state and transient performance.  

Iterative learning control efficiently uses information from previous trials to improve tracking performance within a small number of iterations by updating the feedforward input signal. ILC has successfully been applied to a variety of trajectory tracking scenarios such as motion control of industrial robot arms \cite{Gunnarsson2001} and ground vehicles \cite{Ostafew2013}, manufacturing of integrated circuits \cite{Yu2014}, swinging up a pendulum  \cite{Schoellig2009}, and quadrotor control \cite{Mueller2012}. For a survey on ILC, the reader is referred to \cite{Bristow2006}. In this paper, we use optimization-based ILC in conjunction with a model error estimator \cite{Schoellig2012}.

The remainder of this paper is organized as follows: We define the problem in Section~\ref{sec:problem}. Section~\ref{sec:methodology} details the proposed approach and proves key features such as the transient behavior of the adaptive control. Section~\ref{sec:results} shows our experimental results, including examples with changing system dynamics. We compare our approach to one with a standard underlying feedback controller. Conclusions are provided in Section~\ref{sec:conclusions}.

\section{PROBLEM STATEMENT}
\label{sec:problem}

The goal of this work is to achieve high-precision tracking despite changing system dynamics and uncertain environment conditions. The system optimizes its performance, for a given desired trajectory, over multiple executions of the task. We aim to design an algorithm that does not require to re-learning if the system dynamics continue to change. 

For simplicity of presentation, we assume the uncertain and changing system dynamics (`System' block in Fig.~\ref{fig:cascadedL1architecture}) can be described by a single-input single-output (SISO) system (this approach can be extended to multi-input multi-output (MIMO) systems as described in Section~\ref{sec:results}) identical to~\cite{Hovakimyan2010} for output feedback:
\begin{align}
    y_1(s) &= A(s) (u(s) + d_\lone(s))\, , \qquad y_2(s) = \tfrac{1}{s} y_1(s)\, , \label{eq:generalsystem}
\end{align}
where $y_1(s)$ and $y_2(s)$ are the Laplace transforms of the translational velocity $y_1(t)$, and position $y_2(t)$, respectively, $A(s)$ is a strictly-proper \emph{unknown} transfer function that can be stabilized by a proportional-integral controller, $u(s)$ is the Laplace transform of the input signal, and $d_\lone(s)$ is the Laplace transform of the disturbance signal defined as $d_\lone(t) \triangleq f(t,y_1(t))$, where $f : \Real \times \Real \rightarrow \Real $ is an \emph{unknown} map subject to the following assumption:

\begin{assumption}[Global Lipschitz continuity]
\label{as:lipshitz}
There exist constants $L > 0$ and $L_0 > 0$, such that the following inequalities hold uniformly in $t$:
\begin{align}
    | f(t,v) - f(t,w) | &\leq L | v - w | \, \text{, and} \label{eq:assumption1} \\
    | f(t,w) | &\leq L | w | + L_0 ~ \ \ \forall v,\,w \in \mathbb{R}\,. \label{eq:assumption2}
\end{align}
\end{assumption}
 

The system is tasked to track a desired postition trajectory $y_2^*(t)$, which is defined over a finite-time interval and is assumed to be feasible with respect to the true dynamics of the $\lone$-controlled system (Fig.~\ref{fig:cascadedL1architecture}, blue dashed box). This signal is discretized. We introduce the lifted representation, see \cite{Gunnarsson2001}, for the desired trajectory ${\bf{y^*_2}}=(y^*_2(1),\hdots,y_2^*(N))$, and the output of the plant ${\bf{y_2}}=(y_2(1),\hdots,y_2(N))$, where $N<\infty$ is the number of discrete samples. The tracking performance criterion $J$ is defined as: 
\[ J \triangleq \min_{\bf{e}} {\bf{e}}^T{\bf{Qe}}\]
where ${\bf{e}}={\bf{y}}_2-{\bf{y_2^*}}$ is the tracking error and $\bf{Q}$ is a positive definite matrix. The goal is to improve the tracking performance iteratively; that is, from execution to execution. 

\section{METHODOLOGY}
\label{sec:methodology}
We consider two main subsystems: the extended $\lone$ adaptive controller (blue dashed box in Fig.~\ref{fig:cascadedL1architecture}) and the ILC (red dashed box in Fig.~\ref{fig:cascadedL1architecture}). The extended $\lone$ adaptive controller is presented in Section~\ref{ssec:lone} including proofs of its transient behavior. Section~\ref{ssec:ilc} introduces the ILC. 

\subsection{$\lone$ Adaptive Control}
\label{ssec:lone}

In the proposed framework, the aim of the $\lone$ adaptive controller is to make the system behave in a repeatable, predefined way, even when unknown, changing disturbances affect the system. In this subsection, we describe the extended $\lone$ adaptive controller and provide proofs of the transient behavior.  


In this work, the typical $\lone$ adaptive output feedback controller for SISO systems~\cite{Hovakimyan2010} is nested within a proportional controller (see Fig.~\ref{fig:cascadedL1architecture}). This extended architecture is identical to~\cite{How2009}. The outer-loop proportional controller enables the system to remain within certain position boundaries. Given the proposed extended $\lone$ adaptive control, we must show that the system performs provably close to a given reference model under the uncertainty defined in Section~\ref{sec:problem}. This is done by finding bounds for the transient behavior. The proof is inspired by \cite{Hovakimyan2010}, but is extended to include the proportional controller (`Gain K' in Fig.~\ref{fig:cascadedL1architecture}). 

\subsubsection{Problem Formulation}


The objective of the extended $\lone$ adaptive output feedback controller is to design a control input $u(t)$  such that $y_2(t)$ tracks a bounded piecewise continuous reference input $r_2(t)$. To achieve this, one method is for the output of the $\lone$ adaptive controller nested within the proportional feedback loop $y_1(t)$ to track $r_1(t)$ according to a first-order reference system:
\begin{align}
    M(s) &= \tfrac{m}{s+m}\,, \quad m>0 \,. \label{eq:reference}
\end{align}


\subsubsection{Definitions and $\lone$-Norm Condition}
The system in~\eqref{eq:generalsystem} can be rewritten in terms of the reference system~\eqref{eq:reference}: 
\begin{align}
    y_1(s) &= M(s) (u(s) + \sigma(s)) \,, \label{eq:newsystem}
    \intertext{where uncertainties in $A(s)$ and $d_\lone(s)$ are combined into $\sigma$:}
    \sigma(s) &\triangleq \dfrac{(A(s) - M(s) )u(s) + A(s) d_\lone(s) }{M(s)} \,. \label{eq:sigma}
\end{align}
We consider a strictly-proper low-pass filter $C(s)$ (see Fig.~\ref{fig:cascadedL1architecture}) with $C(0) = 1$, and a proportional gain $K \in \Real^+$, such that:
\begin{align}
    H(s) &\triangleq \dfrac{ A(s) M(s) }{ C(s) A(s) + (1-C(s)) M(s) } \qquad \text{is stable,}\\
    F(s) &\triangleq \dfrac{1}{ s + H(s) C(s) K } \qquad \text{is stable,} \label{eq:defF}
    \intertext{and the following $\lone$-norm condition is satisfied:}
    \| G(s) &\|_{\lone} L < 1\, , \text{where} \label{eq:l1norm} ~
    G(s) \triangleq H(s) (1 - C(s)) F(s)  
\end{align}
and $L$ is the Lipschitz constant defined in Assumption~\ref{as:lipshitz}. 

The $\lone$-norm condition is used to prove bounded-input bounded-output (BIBO) stability of a reference model that will describe the repeatable behavior of the underlying $\lone$ controlled system. The solution of the $\lone$-norm condition in~\eqref{eq:l1norm} exists under the following assumptions:

\begin{assumption}[Stability of $H(s)$]
$H(s)$ is assumed to be stable for appropriately chosen low-pass filter $C(s)$ and first-order reference eigenvalue $-m<0$.
\end{assumption} 
As indicated in~\cite{Hovakimyan2010}, this assumption holds in cases where $A(s)$ can be stabilized by a proportional-integral controller.

\begin{assumption}[Stability of $F(s)$] 
$F(s)$ is assumed to be stable for appropriately chosen proportional gain $K$.
\end{assumption}

A sufficient condition for this assumption to be valid is if $A(s)$ is minimum phase stable, which holds if there is a controller within the system $A(s)$ that is stabilizing a plant without any unstable zeros. In the case of velocity control of a quadrotor, this assumption is valid. Less conservative conditions that guarantee the stability of $F(s)$ exist, but are not necessary for the application in this paper. 

\subsubsection{Extended $\lone$ Adaptive Control Architecture}

The SISO extended $\lone$ adaptive controller architecture is shown in Fig.~\ref{fig:cascadedL1architecture}. With the exception of the proportional feedback loop, this architecture (from $r_1$ to $y_1$) is identical to~\cite{Hovakimyan2010}. The integrator from $y_1$ to $y_2$ allows the outer-loop to control the position, while the $\lone$ adaptive feedback controls the velocity. The equations describing the implementation of the extended $\lone$ output feedback architecture are presented below in~\eqref{eq:outputpredictor}, \eqref{eq:adaptationlaw}, \eqref{eq:controllaw}, and~\eqref{eq:negfeedback}.

\begin{description}
\item [Output Predictor:] The following output predictor is used within the $\lone$ adaptive output feedback architecture:
   \[ \dot{\hat{y}}_1(t) = -m \hat{y}_1(t) + m (u(t) + \hat{\sigma}(t)) \,, \qquad \hat{y}_1(0) = 0 \,, \]
   where $\hat{\sigma}(t)$ is the adaptive estimate of $\sigma(t)$. In the Laplace domain, this is equivalent to:
    \begin{equation}
        \hat{y}_1(s) = M(s) (u(s) + \hat{\sigma}(s)) \,. \label{eq:outputpredictor}
    \end{equation}

\item [Adaptation Law:] The adaptive estimate $\hat{\sigma}(t)$ is updated according to the following update law:
\begin{align}
    \dot{\hat{\sigma}}(t) &= \Gamma \text{Proj} (\hat{\sigma}(t), -mP\tilde{y}(t)) \,, \qquad \hat{\sigma}(0) = 0 \,, \label{eq:adaptationlaw}
\end{align}
where $\tilde{y}(t) \triangleq \hat{y}_1(t) - y_1(t)$, and $P>0$ solves the algebraic Lyapunov equation $mP + Pm = 2mP = -Z$ for $Z>0~$. The variable $\Gamma \in \Real^+$ is the adaptation rate subject to the lower bound as specified in~\cite{Hovakimyan2010}. Typically in $\lone$ adaptive control, $\Gamma$ is set very large. Experiments with this controller were carried out with an adaptation rate of $\Gamma = 1000$. The projection operator defined in~\cite{Hovakimyan2010} ensures that the estimation of $\sigma$ is guaranteed to remain within a specified convex set.
\item [Control Law:]  The control input signal is the difference between the $\lone$ desired trajectory signal $r_1$ and the adaptive estimate $\hat{\sigma}$ after passing through the low-pass filter $C(s)$:
\begin{align}
    u(s) &= C(s) (r_1(s) - \hat{\sigma}(s)) \,. \label{eq:controllaw}
\end{align}

This means that only the low frequencies of the uncertainties within $A(s)$ and $d_\lone(s)$, which the system is capable of counteracting, are compensated for. The high frequency portion is attenuated by the low-pass filter.
\item [Closed-Loop Feedback:] The following equation describes the closed-loop feedback acting on the input to the $\lone$ adaptive output feedback controller $r_1$ based on the output of the system $y_1$. As discussed above: $y_2(s) \triangleq \frac{1}{s} y_1(s)$, and the negative feedback is defined as follows:
\begin{align}
    r_1(s) &= K (r_2(s) - y_2(s)) \,, 
    \label{eq:negfeedback}
\end{align}
where the objective is for $y_2$ to track $r_2$.
\end{description}

\subsubsection{Transient and Steady-State Performance}

The extended $\lone$ adaptive controller is required to perform repeatably and consistently. This is done by guaranteeing that the difference between the output of a known BIBO stable reference system and the output of the actual system is uniformly bounded. Intuitively, the reference system describes the desired behavior of the actual system.

The proof starts off by presenting a BIBO stable closed-loop reference system. This reference system is then compared to the actual extended $\lone$ adaptive output feedback controller. 

\begin{lemma}
\label{lm:referencesystem}
Let $C(s)$, $M(s)$ and $K$ satisfy the $\lone$-norm condition in~\eqref{eq:l1norm}. Then the following closed-loop reference system:
\begin{align}
    y_{2,\text{ref}}(s) &= F(s) H(s) \big(C(s) K r_2(s) + (1 - C(s)) d_{\text{ref}} (s) \big) \notag \\
    d_{\text{ref}} (t) &\triangleq f(t, y_{2,\text{ref}}(t)) \label{eq:clrefsystem}
\end{align}
is BIBO stable.
\end{lemma}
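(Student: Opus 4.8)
The plan is to reduce the BIBO‑stability claim to a single scalar inequality that is closed off by the $\lone$‑norm condition \eqref{eq:l1norm}, following the structure of the output‑feedback argument in \cite{Hovakimyan2010}. First I would rewrite the reference dynamics \eqref{eq:clrefsystem} by expanding the product, so that
\begin{align}
    y_{2,\text{ref}}(s) &= F(s)H(s)C(s)K\, r_2(s) + G(s)\, d_{\text{ref}}(s)\,, \notag
\end{align}
where $G(s) = H(s)(1-C(s))F(s)$ is precisely the transfer function in \eqref{eq:l1norm}. Then I would note that $H(s)$ and $F(s)$ are stable (Assumptions~2 and~3), $C(s)$ is a stable, strictly‑proper low‑pass filter, and consequently both $F(s)H(s)C(s)K$ and $G(s)$ are stable and strictly proper; in particular each has finite $\lone$ norm.

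For the quantitative estimate I would work with truncated signals: for any $\tau\ge 0$ let $x_\tau$ denote the truncation of $x$ to $[0,\tau]$, and use the standard bound $\|y_\tau\|_{\linf}\le \|w(s)\|_{\lone}\,\|u_\tau\|_{\linf}$ for $y(s)=w(s)u(s)$ with $w(s)$ stable and proper. Combining this with the growth condition \eqref{eq:assumption2}, which gives $\|(d_{\text{ref}})_\tau\|_{\linf}\le L\,\|(y_{2,\text{ref}})_\tau\|_{\linf}+L_0$, and collecting terms, I obtain
\begin{align}
    \big(1-\|G(s)\|_{\lone}L\big)\,\|(y_{2,\text{ref}})_\tau\|_{\linf} &\le \|F(s)H(s)C(s)K\|_{\lone}\,\|r_2\|_{\linf}+\|G(s)\|_{\lone}L_0\,. \notag
\end{align}
Since \eqref{eq:l1norm} ensures $1-\|G(s)\|_{\lone}L>0$, this can be solved for $\|(y_{2,\text{ref}})_\tau\|_{\linf}$, yielding a finite bound that is \emph{uniform in $\tau$} whenever $r_2$ is bounded.

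Finally I would close the argument with a well‑posedness/continuation step: because $G(s)$ is strictly proper (no direct feedthrough) and $f(t,\cdot)$ is globally Lipschitz by \eqref{eq:assumption1}, the closed‑loop integral equation \eqref{eq:clrefsystem} has a unique solution on a maximal interval $[0,\tau^\ast)$; the $\tau$‑uniform bound just derived rules out finite escape time, so $\tau^\ast=\infty$ and the bound holds for all $t\ge 0$. Hence $y_{2,\text{ref}}$, and therefore $d_{\text{ref}}$, remain bounded for every bounded $r_2$, which is BIBO stability. I expect the main obstacle to be the well‑posedness step rather than the norm manipulations: one must first establish existence and continuity of the solution before the a‑priori bound may be invoked, and it is exactly strict properness of $G(s)$ together with the Lipschitz assumption that makes this rigorous; the scalar inequality itself is routine once the relevant transfer functions are known to be stable and strictly proper.
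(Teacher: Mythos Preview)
Your proposal is correct and follows essentially the same route as the paper: expand \eqref{eq:clrefsystem} into the two contributions through $F(s)H(s)C(s)K$ and $G(s)$, apply the truncated $\linf$ bound together with Assumption~\ref{as:lipshitz}, rearrange using the $\lone$-norm condition \eqref{eq:l1norm}, and obtain a $\tau$-uniform bound identical to the paper's $\rho_r$. The only addition is your explicit well-posedness/continuation step, which the paper omits (it simply asserts ``this result holds uniformly''); this makes your argument more careful without changing the method.
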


\begin{proof}
Since $r_2(t)$ is bounded and $H(s)$, $C(s)$ and $F(s)$ are strictly-proper stable transfer functions, taking the norm of the reference system and making use of Assumption~\ref{as:lipshitz} yields the following bound:
\begin{align}
    \| y_{2,\text{ref}_\tau} \|_{\linf} &\leq K \| H(s) C(s) F(s) \|_{\lone} \| r_2\|_{\linf} \notag \\
    & \qquad + \| G(s) \|_{\lone} (L \| y_{2,\text{ref}_\tau} \|_{\linf} + L_0) \,,
\end{align}
where $\| y_{2,\text{ref}_\tau} \|_{\linf}$ is the truncated $\linf$-norm of the signal $y_{2,\text{ref}}(t)$ up to $t = \tau$. 
    Let $\rho_r$ be defined as follows:
\begin{align}
    \rho_r &\triangleq \dfrac{ K \| H(s) C(s) F(s) \|_{\lone} \| r_2 \|_{\linf} + \| G(s) \|_{\lone} L_0 }{ 1 - \| G(s) \|_{\lone} L }\,. \label{eq:defnrhor}
\end{align} From the $\lone$-norm condition in~\eqref{eq:l1norm} and the definition of $\rho_r$ in~\eqref{eq:defnrhor}:
\begin{align}
    \| y_{2,\text{ref}_\tau} \|_{\linf} &\leq \rho_r\,.
\end{align}
This result holds uniformly, so $\| y_{2,\text{ref}} \|_{\linf}$ is bounded. Hence, the closed-loop reference system in~\eqref{eq:clrefsystem} is BIBO stable.
\end{proof}

\begin{theorem}
\label{th:bounds}
Consider the system in~\eqref{eq:generalsystem}, with a control input from the extended $\lone$ output feedback adaptive controller defined in~\eqref{eq:outputpredictor}, \eqref{eq:adaptationlaw}, \eqref{eq:controllaw}, and~\eqref{eq:negfeedback}. Suppose $C(s)$, $M(s)$ and $K$ satisfy the $\lone$-norm condition in~\eqref{eq:l1norm}. Then the following bounds hold:
\begin{align}
    \| \tilde{y} \|_{\linf} &\leq \gamma_0 \,, \label{eq:ytildebound} \\
    \| y_{2,\text{ref}} - y_2 \|_{\linf} &\leq \gamma_1 \,, \label{eq:overallbound}
\end{align}
where $\tilde{y}(t) \triangleq \hat{y}_1(t) - y_1(t)$, $\gamma_0 \propto \sqrt{\frac{1}{\Gamma}}$ is defined in~\cite{Hovakimyan2010}, and
\begin{align}
    \gamma_1 &\triangleq \dfrac{ \norm{ \dfrac{F(s) H(s) C(s) }{M(s)} }_\lone}{1 - \| G(s) \|_\lone L} \gamma_0 \,. \label{eq:defgamma1}
\end{align}
\end{theorem}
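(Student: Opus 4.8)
The plan is to follow the standard two-step pattern of $\lone$ output-feedback analysis~\cite{Hovakimyan2010}, modified to absorb the outer proportional loop. Bound~\eqref{eq:ytildebound} is the prediction-error bound and is inherited directly from~\cite{Hovakimyan2010}: since the adaptation law~\eqref{eq:adaptationlaw} uses the projection operator, $\hat\sigma(t)$ remains in a fixed convex set, and the Lyapunov candidate $V=P\tilde y^2+\Gamma^{-1}(\hat\sigma-\sigma)^2$ satisfies $\dot V\le 0$ whenever $|\tilde y|$ exceeds a threshold of order $\sqrt{1/\Gamma}$, giving $\|\tilde y\|_{\linf}\le\gamma_0$ with $\gamma_0\propto\sqrt{1/\Gamma}$. (This presumes the closed-loop signals are bounded, which is established a posteriori from the second bound together with Lemma~\ref{lm:referencesystem} by a continuity/contradiction argument; I would cite~\cite{Hovakimyan2010} for that detail.)

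For~\eqref{eq:overallbound} the plan is to first obtain a Laplace-domain expression for $y_2(s)$ mirroring the reference system~\eqref{eq:clrefsystem}. Combining the predictor~\eqref{eq:outputpredictor} with the control law~\eqref{eq:controllaw} gives $\hat\sigma(s)=\big(\hat y_1(s)-M(s)C(s)r_1(s)\big)/\big(M(s)(1-C(s))\big)$; substituting this and $u(s)$ into the plant $y_1(s)=A(s)(u(s)+d_\lone(s))$, writing $\hat y_1=y_1+\tilde y$, and using the definition of $H(s)$ to collapse the rational terms yields
\[ y_1(s)=H(s)\big(C(s)r_1(s)+(1-C(s))d_\lone(s)\big)-\tfrac{H(s)C(s)}{M(s)}\,\tilde y(s)\,. \]
Closing the proportional loop with $r_1(s)=K(r_2(s)-y_2(s))$ and $y_2(s)=\tfrac{1}{s}y_1(s)$ and solving for $y_2(s)$ brings in the factor $F(s)=1/(s+H(s)C(s)K)$ of~\eqref{eq:defF}, giving
\[ y_2(s)=F(s)H(s)\big(C(s)Kr_2(s)+(1-C(s))d_\lone(s)\big)-\tfrac{F(s)H(s)C(s)}{M(s)}\,\tilde y(s)\,. \]
Subtracting this from~\eqref{eq:clrefsystem} and recalling $G(s)=H(s)(1-C(s))F(s)$ produces the error identity $y_{2,\text{ref}}(s)-y_2(s)=G(s)\big(d_{\text{ref}}(s)-d_\lone(s)\big)+\tfrac{F(s)H(s)C(s)}{M(s)}\tilde y(s)$.

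To close the bound, I would take the truncated $\linf$-norm of the error identity, apply $\|G(s)\,w_\tau\|_{\linf}\le\|G(s)\|_{\lone}\|w_\tau\|_{\linf}$, bound $\|(d_{\text{ref}}-d_\lone)_\tau\|_{\linf}\le L\,\|(y_{2,\text{ref}}-y_2)_\tau\|_{\linf}$ via Assumption~\ref{as:lipshitz}, and insert $\|\tilde y\|_{\linf}\le\gamma_0$ from the first bound; this gives $\|(y_{2,\text{ref}}-y_2)_\tau\|_{\linf}\le\|G(s)\|_{\lone}L\,\|(y_{2,\text{ref}}-y_2)_\tau\|_{\linf}+\big\|\tfrac{F(s)H(s)C(s)}{M(s)}\big\|_{\lone}\gamma_0$, and the $\lone$-norm condition~\eqref{eq:l1norm} lets me move the first term to the left and divide to obtain $\|(y_{2,\text{ref}}-y_2)_\tau\|_{\linf}\le\gamma_1$ with $\gamma_1$ exactly as in~\eqref{eq:defgamma1}; uniformity in $\tau$ then yields~\eqref{eq:overallbound}. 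The main obstacle is the algebra in the middle step — eliminating $\sigma$ and $\hat\sigma$ and verifying that the formally improper factors $1/M(s)$ and $1/\big(M(s)(1-C(s))\big)$ cancel against $H(s)$ so that only the proper, stable transfer functions $H(s)$, $F(s)H(s)$ and $F(s)H(s)C(s)/M(s)$ survive (the last being proper because $C(s)$ is strictly proper) — together with the mild circularity flagged in the first step, which is resolved exactly as in~\cite{Hovakimyan2010}.
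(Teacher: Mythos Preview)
Your proposal is correct and follows essentially the same approach as the paper's proof: derive a closed-loop expression for $y_2(s)$ in the form of the reference system~\eqref{eq:clrefsystem} plus a perturbation driven by $\tilde y$, subtract, and close via Assumption~\ref{as:lipshitz} and the $\lone$-norm condition~\eqref{eq:l1norm}. The only cosmetic difference is that the paper introduces auxiliary transfer functions $H_0(s)$ and $H_1(s)$ and carries the algebra in terms of $\tilde\sigma$ (invoking $\tilde y(s)=M(s)\tilde\sigma(s)$ at the end), whereas you eliminate $\hat\sigma$ directly from the predictor and work with $\tilde y$ from the outset; both routes land on the same error identity and the same final bound.
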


\begin{proof}
See Appendix.
\end{proof}

The bounds given in~\eqref{eq:ytildebound} and~\eqref{eq:overallbound} show that the difference between the output predictor and the system output $y_1(t)$ and the difference between the reference system and the system output $y_2(t)$ are uniformly bounded with bounds inversely proportional to the square root of the adaptation gain $\Gamma$. This means that for high adaptation gains, the actual system approaches the behavior of the reference system~\eqref{eq:clrefsystem}. Hence, the system achieves repeatable and consistent performance, which is required for ILC.

\subsection{Iterative Learning Control}
\label{ssec:ilc}
We use ILC to improve the tracking performance of the underlying, repeatable system. The algorithm updates the feedforward signal $r_2(t)$ based on data gathered during previous iterations. The ILC implementation in this work is based on~\cite{Schoellig2012}. In this subsection, we give a brief summary of the optimization-based ILC used in this work and highlight the differences to the approach in \cite{Schoellig2012}, where a more detailed description is found.

We consider a repeatable system as seen by the ILC, which includes both the plant and the extended $\lone$ adaptive controller (blue dashed box and shadowed box in Fig.~\ref{fig:cascadedL1architecture}), and whose key dynamics can be represented by the following model: 
\begin{equation}
\label{eq:nominalmodel}
 \begin{array}{c c}
  \dot{x}(t)=g(x(t),r_2(t))\,, & y_2(t) = h(x(t))\, ,
 \end{array}
\end{equation}
where $g$ and $h$ are nonlinear function, $r_2(t)\in \mathbb{R}$ is the control input to the system, $x(t)\in \mathbb{R}^{n_x}$ is the state and $y_2(t)\in\mathbb{R}$ is the output. To satisfy the typical ILC assumption of identical initial conditions, despite unknown disturbances, experiments start when the system state is in close vicinity of the desired initial state. This is possible as the $\lone$ adaptive controller compensates for the effect of unknown disturbances.

The desired output trajectory $y_2^*(t)$ is assumed to be feasible based on the nominal model~(\ref{eq:nominalmodel}), where ($r_2^*(t),\ x^*(t),\ y_2^*(t)$) satisfy~(\ref{eq:nominalmodel}). 
We assume that the system stays relatively close to the reference trajectory; hence, we only consider small deviations from the above nominal trajectories, $\tilde{r}_2(t)$, $\tilde{x}(t)$ and $\tilde{y}_2(t)$. The system is linearized about the nominal trajectories to obtain a time-varying, linear state-space model, which approximates the system dynamics along the reference trajectory. The system is discretized and rewritten in the lifted representation as in \cite{Schoellig2012}. We define $\bar{\bf{y}}_2=(\tilde{y}_2(1),\hdots,\tilde{y}_2(N))\in \mathbb{R}^{N}$ and analogously we define $\bar{\bf{r}}_2$.
The lifted representation for the extended system is written as: 
\begin{equation}
\label{eq:system}
 {\bar{\bf{y}}}_{2,j}={\bf{F}}_{\textnormal{ILC}}\bar{\bf{r}}_{2,j}+{\bf{d}}_j\,,
\end{equation}
where the subscript $j$ denotes the iteration number, ${\bf{F}}_{\textnormal{ILC}}$ is a constant matrix derived from the nominal model and $\bf{d}$ represents a repetitive disturbance that is initially unknown. 

Using the approach presented in~\cite{Mueller2012} and~\cite{Schoellig2012}, an iteration-domain Kalman filter for the system~(\ref{eq:system}) is used to compute the estimate $\widehat{\bf{d}}_{j|j}$ based on measurements from iterations 
$1,\hdots,j$. 

An optimization-based update step computes the next reference sequence $\bar{\bf{r}}_{2,j+1}$ that compensates for the identified disturbance $\widehat{\bf{d}}_{j|j}$ and estimated output error  $\widehat{\bf{y}}_{j+1|j}$, where $\widehat{\bf{y}}_{j+1|j} ={\bf{F}}_{\textnormal{ILC}} \bar{\bf{r}}_{2,j}+ \widehat{\bf{d}}_{j|j}$. In the input update step, the following quadratic cost function is minimized:
\begin{equation}
\label{eq:costfunction}
 \min_{\bar{\bf{r}}_{2,j+1}}\left( \widehat{\bf{y}}_{j+1|j}^T{\bf{Q}}\widehat{\bf{y}}_{j+1|j} + \bar{\bf{r}}_{2,j+1}^T{\bf{S}}\bar{\bf{r}}_{2,j+1} + {\bf{\ddot{\bar{r}}}}_{2,j+1}^T{\bf{R}}{\bf{\ddot{\bar{r}}}}_{2,j+1}\right)
\end{equation}
subject to 
\[   \ddot{\mathbf{\bar{r}}}_{2,j+1} \leq\mathbf{a}_{\max}\,,\] 
where $\mathbf{a}_{\max}$ is a constraint based on the maximum acceleration achievable by the physical system. The sequence ${\bf{\ddot{\bar{r}}}}_{2,j+1}$ represents the discrete approximation of the second derivative of the input reference. The constant matrices ${\bf{Q}}$, ${\bf{R}}$, ${\bf{S}}$ are symmetric positive definite matrices that weight different components of the cost function. 
The cost function tries to minimize the tracking error of the system (weighted by $\bf{Q}$), the control effort required (weighted by $\bf{S}$) and the rate of change of the reference signal derivative (weighted by $\bf{R}$). We use the IBM CPLEX optimizer to solve the above optimization problem. The cost function used in this work is different from the cost function in \cite{Schoellig2012} as it includes both the input and its second derivative to improve the performance of the given task.

In previous work (see \cite{Degen2014} and \cite{Lee2000}) the convergence for optimization-based ILC with Kalman filter, such as the one used in this paper, was proven. However, the cost function in \cite{Degen2014} and \cite{Lee2000} differs from the cost function in this paper. Instead  of including $\bar{\bf{r}}_{2,j+1}$ as in~(\ref{eq:costfunction}), the  cost function in~\cite{Degen2014} and \cite{Lee2000} only includes the reference input change from iteration to iteration $\Delta \bar{\bf{r}}_{2,j+1}= \bar{\bf{r}}_{2,j+1} - \bar{\bf{r}}_{2,j}$. Future work will extend the proof of \cite{Degen2014}, \cite{Lee2000} to our setup~(\ref{eq:costfunction}).

\section{EXPERIMENTAL RESULTS}
\label{sec:results}
The proposed framework combining $\lone$ adaptive control and ILC ($\lone$-ILC) is used to minimize the trajectory tracking error of a quadrotor flying a three-dimensional trajectory under different dynamic disturbances. The SISO architecture derived in the previous section is extended to the MIMO quadrotor system by implementing ($3\times 3$) diagonal transfer function matrices for the low-pass filter and first-order output predictor. The signals $r_1(t)$, $r_2(t)$, $y_1(t)$, and $y_2(t)$ are the desired translational velocity, desired position, quadrotor translational velocity and quadrotor position, respectively. This implementation is identical to~\cite{How2009}, which ensures that the quadrotor remains within the boundaries of the indoor flying space. Each element of the three-dimensional signals and each diagonal element of the transfer function matrices correspond to the $x$, $y$ and $z$ inertial directions, respectively.

The experiments were performed using the commercial quadrotor platform AR.Drone 2.0 from Parrot. An overhead motion capture camera system is used to obtain position information. To test the performance of the proposed approach under unknown, changing disturbances, we change the dynamic behavior of the quadrotor by adding a \emph{mass disturbance}. To create the mass disturbance a 50~g mass is suspended 55~cm below the back-left leg, 17 cm from the geometric center of the frame, creating a pendulum. 

We compare the performance of the proposed $\lone$-ILC approach with that of a pure ILC with an underlying, non-adaptive proportional-derivative controller (PD-ILC). 
To quantify the controller performance, the error in the system is defined as: 
\begin{equation}
 e=\dfrac{\sum_{i=1}^N \sqrt{(e_x(i))^2+(e_y(i))^2+(e_z(i))^2}}{N}
\end{equation}
where $e_x(i)=r_{2,x}^*(i)-y_{2,x}(i)$, $e_y(i)=r_{2,y}^*(i)-y_{2,y}(i)$ and $e_x(i)=r_{2,z}^*(i)-y_{2,z}(i)$ are the deviations from the desired trajectory in each axis. We consider three scenarios to compare the performance of the control frameworks:learning convergence and generalizability, repeatability, and performance under changing conditions. In all three scenarios the $\lone$-ILC approach outperforms the PD-ILC approach.

\begin{figure}[htp]
 \centering
  \begin{subfigure}[b]{0.5\textwidth}
   \centering
   \includegraphics[width=\textwidth]{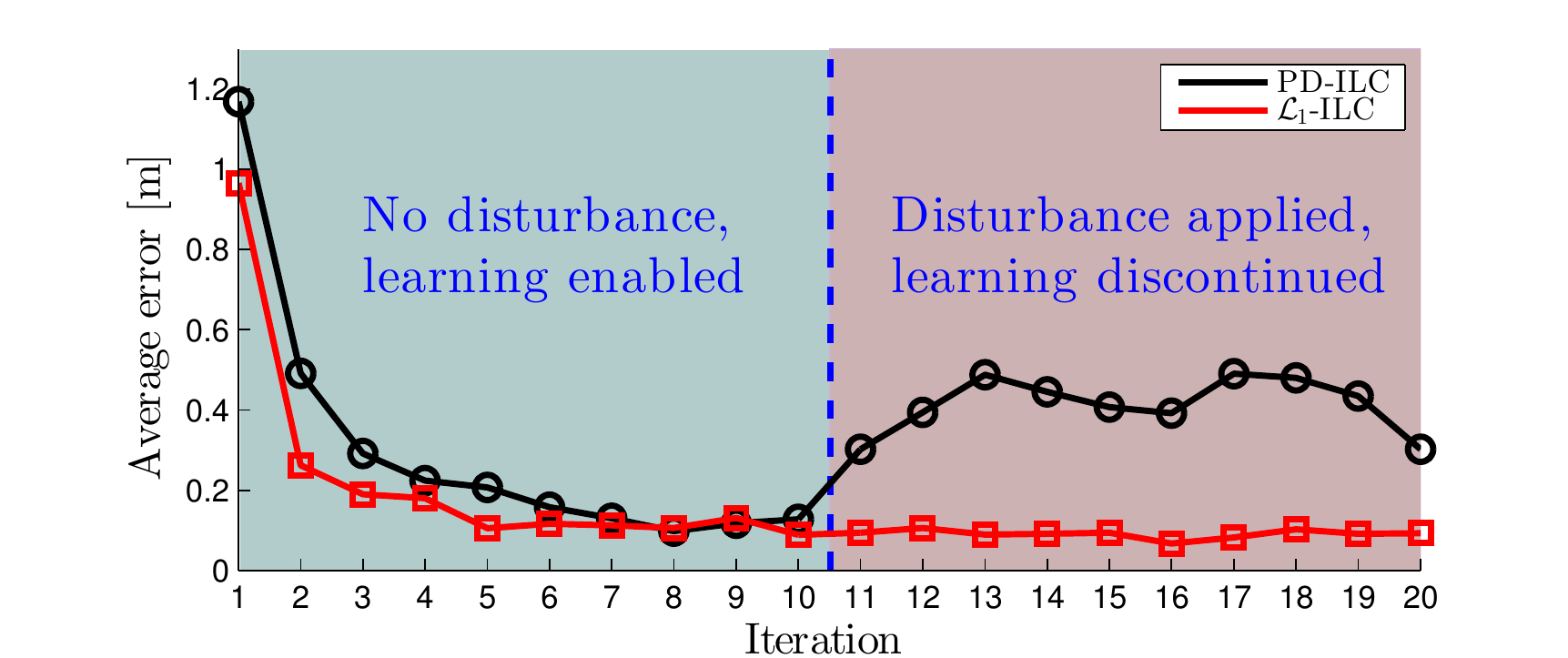}%
   \caption{}
   \label{fig:realerrorl1ncrepeat}%
  \end{subfigure}
  \begin{subfigure}[b]{0.5\textwidth}
   \centering
   \includegraphics[width=\textwidth]{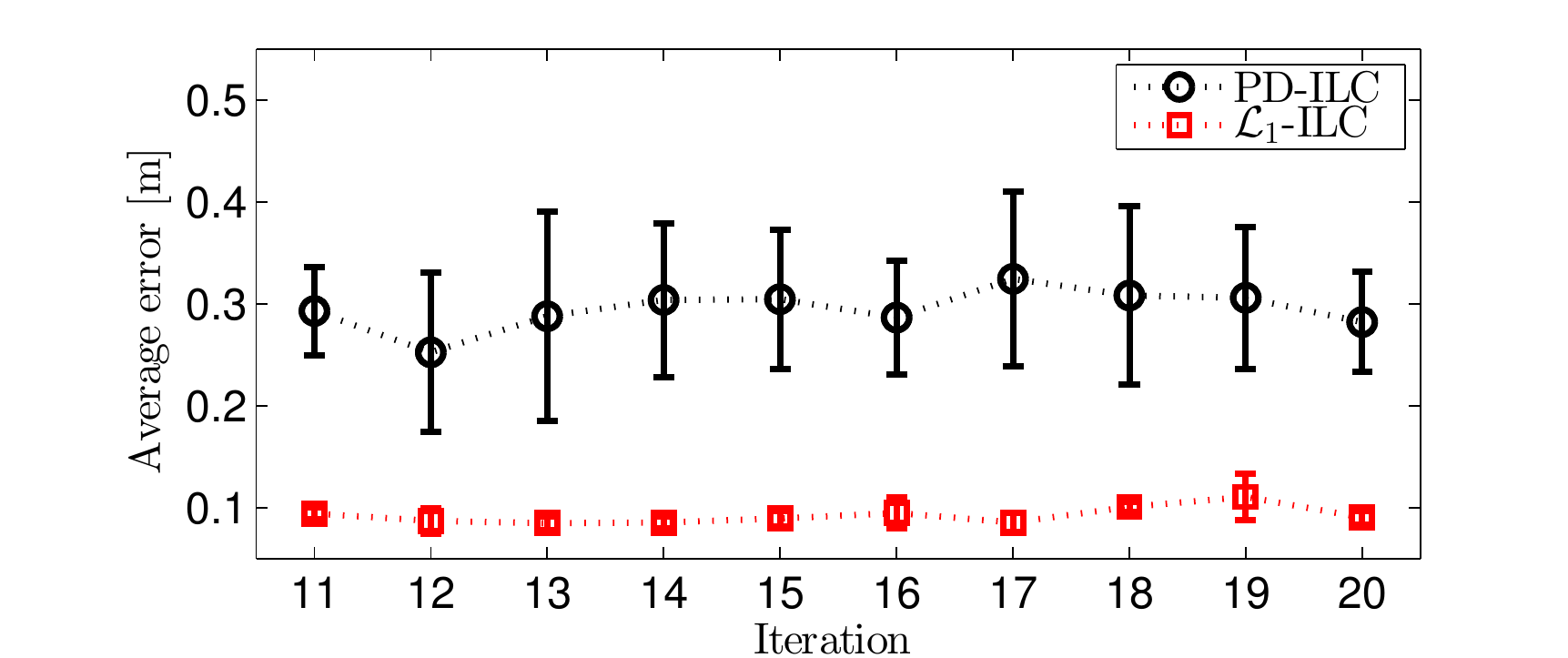}%
   \caption{}
   \label{fig:realerror5l1ncrepeat}%
  \end{subfigure}
 \caption{(\subref{fig:realerrorl1ncrepeat}) The $\lone$-ILC approach shows a faster learning convergence initially. At iteration 11 a disturbance is applied and learning is disabled: the $\lone$-ILC error is not affected while the PD-ILC error increases significantly.
 (\subref{fig:realerror5l1ncrepeat}) The mean of the error across five 10-iteration sets shows the repeatability of the learned trajectory after a mass disturbance is applied to the system. The PD-ILC approach displays a significantly larger error and standard deviation compared to the $\lone$-ILC approach.}
\end{figure}

\subsection{Learning Convergence and Generalizability}
The quadrotor learns to track a desired trajectory using each of the two frameworks: PD-ILC and $\lone$-ILC. The errors of this initial learning process (iteration 1-10) are depicted in Fig.~\ref{fig:realerrorl1ncrepeat}. The proposed $\lone$-ILC shows lower errors consistently and converges faster.

After this initial learning process a mass disturbance is applied to the system and the learning is discontinued. The learned trajectory at iteration ten is repeated for ten more iterations with both the $\lone$-ILC and PD-ILC framework, see  Fig.~\ref{fig:realerrorl1ncrepeat}. The PD-ILC framework shows a 323$\%$ increase after the mass disturbance is applied. The $\lone$-ILC approach shows no noticeable increase in the error because the $\lone$ adaptive controller achieves repeatable behavior, despite the disturbances applied to the system. 

\subsection{Repeatability}
\begin{figure}[htp]
 \centering
  \begin{subfigure}[b]{0.5\textwidth}
   \centering
   \includegraphics[width=\textwidth]{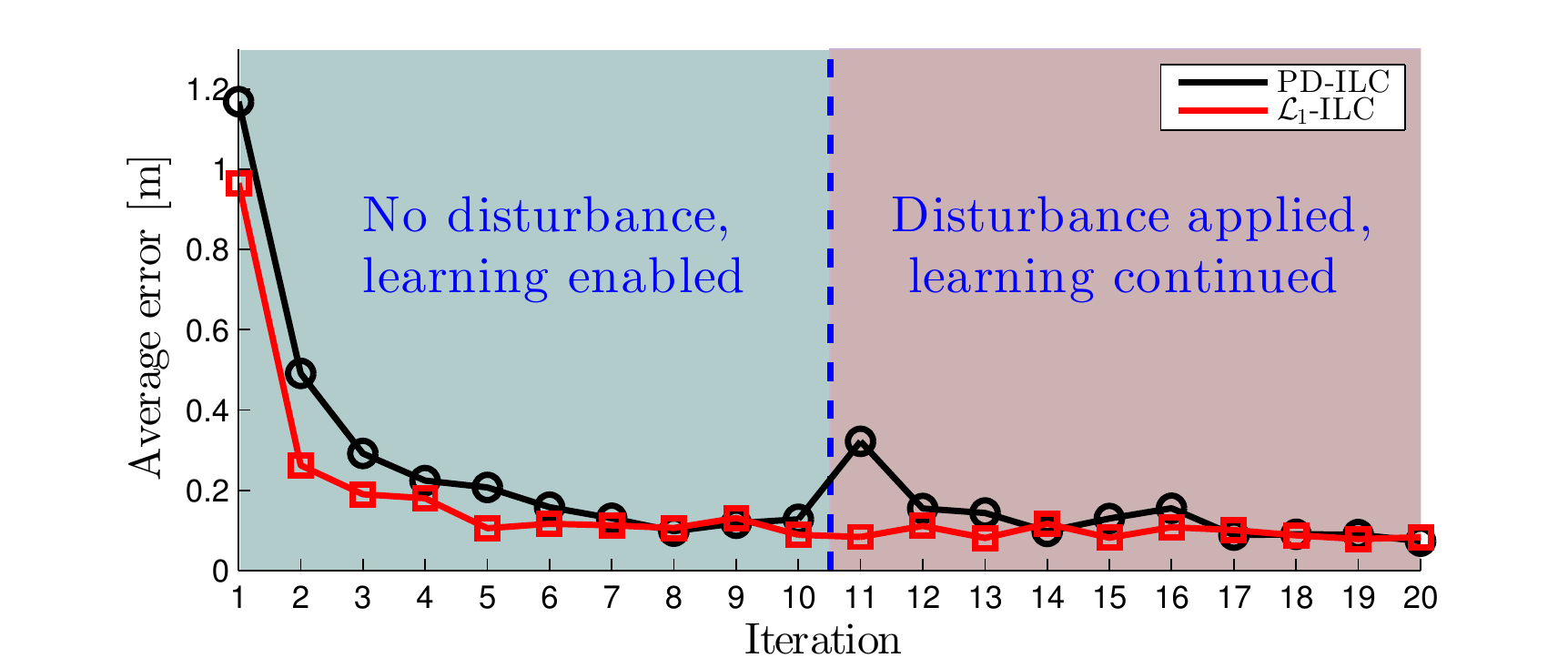}%
   \caption{}
   \label{fig:realerrorl1nclearn}%
  \end{subfigure}
  \begin{subfigure}[b]{0.5\textwidth}
   \centering
   \includegraphics[width=\textwidth]{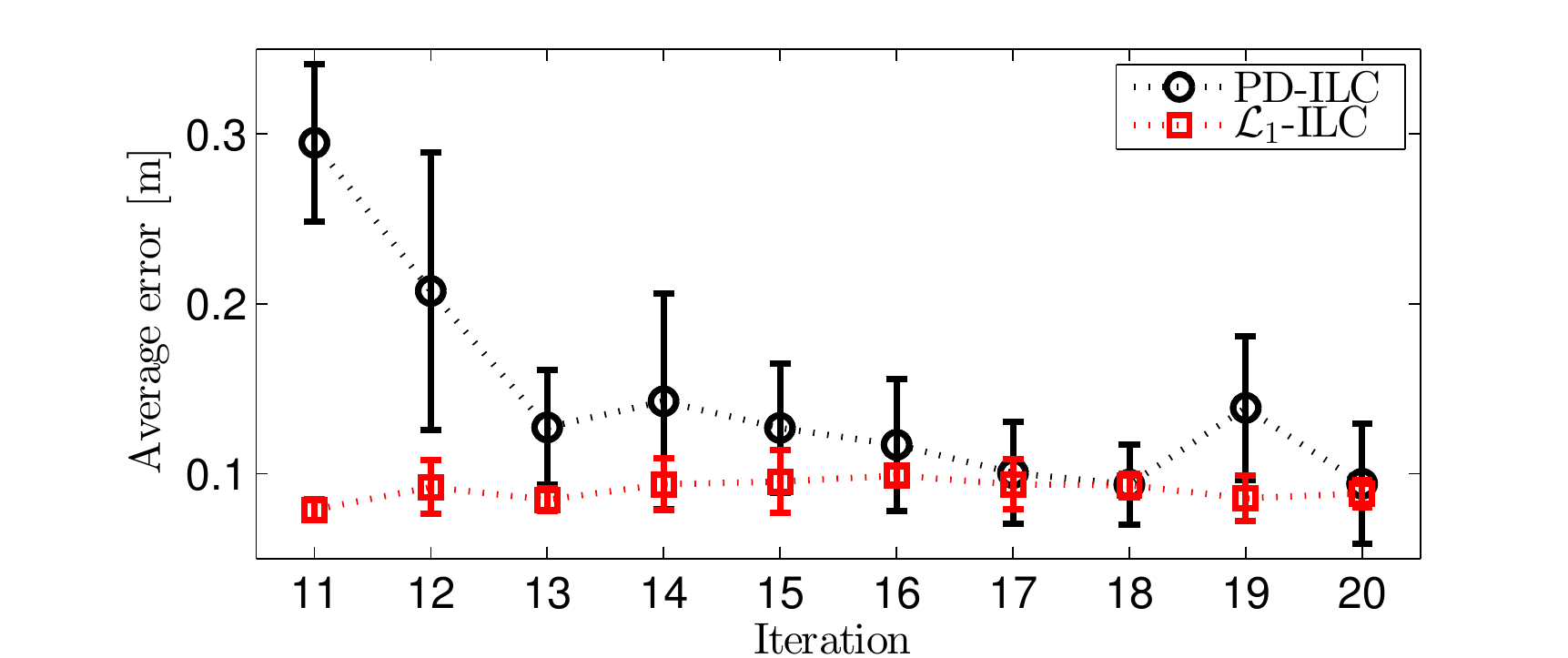}%
   \caption{}
   \label{fig:realerror5l1nclearn}%
  \end{subfigure}
 \caption{(\subref{fig:realerrorl1nclearn}) Learning behavior after a mass disturbance is applied to the system at the end of iteration ten. The error of the PD-ILC framework after the disturbance increases dramatically; while the error of the $\lone$-ILC framework is virtually unchanged. 
 (\subref{fig:realerror5l1nclearn}) Average error across five sets of ten iterations of learning after a mass disturbance is applied. The PD-ILC approach displays  a  significantly larger error and standard deviation than that of the $\lone$-ILC approach.}
\end{figure}

To assess the repeatability of the overall control after a mass disturbance has been applied to the system, we discontinued learning and performed five experiments with ten iterations each for both control frameworks. Fig.~\ref{fig:realerror5l1ncrepeat} shows the average error of the five sets at each iteration along with their standard deviation. The system is more repeatable with the $\lone$-ILC framework as the error and standard deviation are much smaller than with the PD-ILC framework.

\subsection{Performance under Changing Conditions}
The ability of the system to continue to learn after a disturbance has been applied is also explored. The errors while the system is learning without disturbance (first ten iterations) and with a mass disturbance (last ten iterations) are shown in  Fig.~\ref{fig:realerrorl1nclearn}. The error increases significantly in the PD-ILC framework after the  disturbance is applied. This error rapidly decreases as the system continues to learn; however, for some applications, this behavior may not be acceptable. The error in the $\lone$-ILC framework does not change even after the mass disturbance has been applied. 

The learning behavior is further explored by obtaining a total of five 10-iteration sets of the learning systems after the mass disturbance is applied. The average of the error and the standard deviation across the five sets are shown in Fig.~\ref{fig:realerror5l1nclearn}. The average error at iteration eleven for the PD-ILC framework is significantly higher than for the $\lone$-ILC framework. The standard deviation is notably higher for the PD-ILC approach for all iterations. The $\lone$-ILC experiments show that the learned input trajectory can be re-used even if the system dynamics are changed.  

\section{CONCLUSIONS}
\label{sec:conclusions}
In this paper, we introduced an $\lone$-ILC framework for trajectory tracking. The $\lone$ adaptive controller forces the system to remain close to a predefined nominal system behavior, even in the presence of unknown and changing disturbances. However, having a repeatable system does not imply achieving zero tracking error. We use ILC to learn from previous iterations and improve the tracking performance over time. We proved that the proposed framework is stable and achieves learning convergence. Experiments on quadrotors showed significant performance improvements of the proposed $\lone$-ILC approach compared to a non-adaptive PD-ILC approach in terms of learning convergence, repeatability, and behavior under disturbances. The learned reference trajectories of the $\lone$-ILC framework are re-usable even if the system dynamics are changed, because the $\lone$ adaptive controller compensates for the unknown, changing disturbances. As far as the authors are aware, this is the first work to show such an $\lone$-ILC framework in real-world experiments and on quadrotor vehicles, specifically. 
\addtolength{\textheight}{-7cm}  
\appendix{

Below we sketch the proof of Theorem~\ref{th:bounds}:
\begin{proof}
Theorem 4.1.1 in~\cite{Hovakimyan2010} proves the bound in~\eqref{eq:ytildebound} under the same assumptions as made in this paper. The bound in~\eqref{eq:overallbound} remains to be shown. The following definitions will become useful:
\begin{align}
    H_0 (s) &\triangleq \dfrac{A(s)}{C(s)A(s) + (1 - C(s)) M(s)} \label{eq:defH0} \,\text{, and} \\
    H_1 (s) &\triangleq \dfrac{ ( A(s) - M(s) ) C(s) }{C(s)A(s) + (1 - C(s)) M(s)} \,. \label{eq:defH1} 
    \intertext{In~\cite{Hovakimyan2010}, it is shown that both $H_0(s)$ and $H_1(s)$ are strictly-proper stable transfer functions. Furthermore, the following expressions using~\eqref{eq:defH0} and~\eqref{eq:defH1} can be verified:}
    M(s) H_0(s) &= H(s) \,\text{, and} \label{eq:proofexpression1}\\
    M(s) \Big( C(s) + &H_1(s) (1 - C(s)) \Big) = H(s) C(s) \,. \label{eq:proofexpression2}
\end{align}

Let $\tilde{\sigma}(t) \triangleq \hat{\sigma} (t) - \sigma(t)$ where $\hat{\sigma}$ is the adaptive estimate, and $\sigma$ is defined in~\eqref{eq:sigma}. The control law in~\eqref{eq:controllaw} can be expressed as:
\begin{align}
    u(s) &= C(s) r_1(s) - C(s) (\tilde{\sigma}(s) + \sigma(s)) \,. \label{eq:proofcontrol} \\
    \intertext{Substitution of~\eqref{eq:proofcontrol} into~\eqref{eq:sigma} and making use of the definitions in~\eqref{eq:defH0} and~\eqref{eq:defH1} results in the following expression for $\sigma(s)$:}
    \sigma(s) &= H_1(s) (r_1(s) - \tilde{\sigma}(s)) + H_0(s) d_\lone(s) \,. \label{eq:proofsigma}
    \intertext{Substitution of~\eqref{eq:proofcontrol} and~\eqref{eq:proofsigma} into the system~\eqref{eq:newsystem} results in:}
    y_1(s) &= M(s) \Big( C(s) + H_1(s) (1 - C(s)) \Big) \big( r_1(s) - \tilde{\sigma}(s) \big) \notag \\
    & \qquad + M(s) H_0(s) (1 - C(s)) d_\lone(s) \,.  \notag
    \intertext{From~\eqref{eq:proofexpression2} and~\eqref{eq:proofexpression1}, this expression simplifies to:}
    y_1(s) &= H(s) C(s) \big( r_1(s) - \tilde{\sigma}(s) \big) + H(s) (1 - C(s)) d_\lone(s) \,. \label{eq:finproofsystem}
\end{align}
An expression for $y_2$ is obtained by substituting~\eqref{eq:finproofsystem} and~\eqref{eq:negfeedback} into $y_2(s) = \frac{1}{s} y_1(s)$ and making use of the definition in~\eqref{eq:defF}:
\begin{align}
    y_2(s) &= F(s) H(s) \Big( C(s) K r_2(s) + (1 - C(s)) d_\lone(s) \Big) \notag \\
    & \qquad - F(s) H(s) C(s) \tilde{\sigma}(s)\,. \label{eq:cllonesystem}
\end{align}
Substitution of~\eqref{eq:outputpredictor} and~\eqref{eq:newsystem} into the definition of $\tilde{y}$ in the adaptation law results in the following expression for $\tilde{y}(s)$:
\begin{align}
    \tilde{y}(s) &= M(s) \tilde{\sigma}(s)\,. \label{eq:prooftildey}
\end{align}
Recalling the reference system in~\eqref{eq:clrefsystem} and using the expression for $y_2$ in~\eqref{eq:cllonesystem}, the error between reference and actual systems, $y_{2,\text{ref}} - y_2$ is:
\begin{align}
    y_{2,\text{ref}}(s) - y_2(s) &= F(s) H(s) \big( 1 - C(s) \big) ( d_{\text{ref}} (s) - d_\lone(s) ) \notag \\
    & \qquad - \dfrac{F(s) H(s) C(s)}{M(s)} M(s) \tilde{\sigma}(s) \,. \notag
    \intertext{Substituting the expression for $\tilde{y}(s)$ in~\eqref{eq:prooftildey} and the definition of G(s) in~\eqref{eq:l1norm}, we obtain:}
    y_{2,\text{ref}}(s) - y_2(s) &= G(s) ( d_{\text{ref}} (s) - d_\lone(s) ) \notag \\
    & \qquad - \dfrac{F(s) H(s) C(s)}{M(s)} \tilde{y}(s) \,. \notag
    \intertext{Finally, since the $\lone$-norm of G(s) exists, and $\frac{F(s) H(s) C(s)}{M(s)}$ is strictly proper and stable, the following bound can be derived by taking the truncated $\linf$-norm and by making use of Assumption~\ref{as:lipshitz}:}
    \big\| y_{2,\text{ref}_t} - y_{2_t} \big\|_\linf &\leq \big\| G(s) \big\|_\lone L \big\| y_{2,\text{ref}_t} - y_{2_t} \big\|_\linf \notag \\
    & \qquad + \bigg\| \dfrac{F(s) H(s) C(s)}{M(s)} \bigg\|_\lone \big\| \tilde{y}_t \big\|_\linf \notag\\
    &\leq \dfrac{\bigg\| \dfrac{F(s) H(s) C(s)}{M(s)} \bigg\|_\lone}{1 - \big\| G(s) \big\|_\lone L} \big\| \tilde{y}_t \big\|_\linf \,, \notag
    \intertext{which holds uniformly. From the bound in~\eqref{eq:ytildebound} proven in~\cite{Hovakimyan2010}, the following bound is derived:}
    \big\| y_{2,\text{ref}} - y_{2} \big\|_\linf &\leq \dfrac{\bigg\| \dfrac{F(s) H(s) C(s)}{M(s)} \bigg\|_\lone}{1 - \big\| G(s) \big\|_\lone L} \gamma_0 = \gamma_1 \,, \notag
\end{align}
proving the second bound in~\eqref{eq:overallbound}.
\end{proof}



\bibliographystyle{IEEEtran}
\bibliography{thisbib}

\end{document}